\newtheorem{theorem}{Theorem}
\newtheorem{lemma}{Lemma}
\newcommand{\relu}{ReLU\xspace}
\newcommand{\reluop}{\mathrm{ReLU}}
\newcommand{\iter}{\mathrm{iter}}
\newcommand{\dnn}{DNN\xspace}
\newcommand{\urfornnv} {\texttt{UR4NNV}\xspace}
\newcommand{\sat}{SAT\xspace}
\newcommand{\smt}{SMT\xspace}
\newcommand{\dpll}{DPLL\xspace}
\newcommand{\safe}{\textsc{Sa}\xspace}
\newcommand{\unsafe}{\textsc{Uns}\xspace}
\newcommand{\UNK}{\textsc{Unk}\xspace}
\DeclareMathOperator*{\sigmoid}{sigmoid}
\title{\urfornnv: Neural Network Verification, Under-approximation Reachability Works!}
\author{
Zhen Liang$^1$
\and
Taoran Wu$^{2,3}$\and
Ran Zhao$^{1}$\and
Bai Xue$^{2}$ \and
Ji Wang$^1$ \and
Wenjing Yang$^1$ \and \\
Shaojun Deng$^1$ \And
Wanwei Liu$^{1,4}$ 
\affiliations
$^1$College of Computer Science and Technology, National University of Defense Technology\\
$^2$Institute of Software, Chinese Academy of Sciences\\
$^3$School of Computer Science and Technology, University of Chinese Academy of Sciences\\
$^4$Key Laboratory of Software Engineering for Complex Systems, National University of Defense Technology
\emails
\{liangzhen, zhaoran13, wj, wenjing.yang, dengshaojun, wwliu\}@nudt.edu.cn, \\
\{wutr, xuebai\}@ios.ac.cn
}
\begin{document}

\maketitle

\begin{abstract}

Recently, formal verification of deep neural networks ({\dnn}s) has garnered considerable attention, and over-approximation based methods have become popular due to their effectiveness and efficiency. However, these strategies face challenges in addressing the ``unknown dilemma" concerning whether the exact output region or the introduced approximation error violates the property in question. To address this, this paper introduces the \urfornnv verification framework, which utilizes under-approximation reachability analysis for DNN verification for the first time. \urfornnv focuses on {\dnn}s with Rectified Linear Unit (\relu) activations and employs a binary tree branch-based under-approximation algorithm. In each epoch, \urfornnv under-approximates a sub-polytope of the reachable set and verifies this polytope against the given property. Through a trial-and-error approach, \urfornnv effectively falsifies {\dnn} properties while providing confidence levels when reaching verification epoch bounds and failing falsifying properties. Experimental comparisons with existing verification methods demonstrate the effectiveness and efficiency of \urfornnv, significantly reducing the impact of the ``unknown dilemma".

\end{abstract}

\section{Introduction}
\label{intro}

Deep neural networks ({\dnn}s), also known as NNs, have achieved remarkable success in the field of artificial intelligence in recent decades and have gained significant popularity in a variety  domains, including  natural language processing~\cite{DBLP:conf/nips/YuanNL21,DBLP:conf/nips/KarchTHMO21} and autonomous driving~\cite{DBLP:journals/firai/LiuWLTW23}. However, {\dnn}s are not infallible and often exhibit  unexpected behaviors, which can lead to system failures and even loss of life. Consequently, it is crucial to subject {\dnn}s to formal verification before practical deployment tn ensure their satisfaction of required properties such as safety, robustness, and fairness, particularly in safety-critical domains. 


Recently, numerous verification approaches have been proposed \cite{kochdumper2022open,huang2019reachnn,JCST-2207-12703}. These approaches can be categorized into two classes: \textit{sound and complete verification methods}, and \textit{sound and incomplete verification methods}. 
The former involves an exhaustive examination of whether the output space of DNNs violates certain properties, often utilizing SMT/SAT solvers \cite{10.1007/978-3-319-63387-9_5,ehlers2017formal} or reachability analysis~\cite{DBLP:journals/corr/abs-1712-08163}. Sound and complete verification methods can provide deterministic conclusions, such as \textsc{Sat} or \textsc{Unsat}, regarding the given properties. However, these methods suffer from prohibitive computation burdens and limited scalability, with verification of small DNNs taking hours or days and larger DNNs being practically infeasible. In this paper, sound and complete verification methods are also referred to as \textit{exact verification methods}.


To alleviate the verification duration significantly, sound and incomplete verification methods are introduced. As the name implies, these methods provide deterministic conclusions of \textsc{Sat} or \textsc{Unsat} but may fail to verify some properties, resulting in \textsc{Unknown} conclusions. The \textsc{Unknown} results arise from the tradeoff made by these methods to speed up DNN verification at the expense of verification accuracy. These approaches focus on over-approximating the exact output space and checking the over-approximated output spaces against the given properties \cite{gehr2018ai2,10.5555/3327546.3327739}. Convex optimization or reachability analysis are commonly employed in these methods to over-approximate the exact output regions. However, over-approximation brings forth uncertainties regarding whether the introduced errors or the exact output spaces violate certain properties in \textsc{Unknown} cases, creating an unknown dilemma. Sound and incomplete verification methods are also referred to as \textit{over-approximation verification methods}.


To address the unknown dilemma, this paper proposes a novel verification framework named \textbf{U}nder-approximation \textbf{R}eachability for \textbf{N}eural \textbf{N}etwork \textbf{V}erification ({\urfornnv}). {\urfornnv} leverages under-approximation reachability analysis to distinguish cases where the exact output space or the approximation error violates DNN properties. The exact output region of ReLU DNNs consists of a set of polytopes, and {\urfornnv} adopts a random under-approximation reachability strategy to represent a sub-polytope of the exact output region during each epoch. It examines the under-approximation branch with respect to the given property. If an under-approximation polytope violates the required property, the verification terminates and the property is proven to be unsatisfiable (\textsc{Unsat}). With a large number of under-approximation epochs, {\urfornnv} goes through the exact polytope set to provide deterministic conclusions. If no property violations occur, {\urfornnv} returns satisfiable (\textsc{Sat}) with high confidence.


\textbf{Contribution.} The main contributions of this paper are listed as follows:
\begin{itemize}
    \item This paper focuses on addressing the ``unknown dilemma" of over-approximation verification approaches for \relu\ {\dnn}s and proposes the \urfornnv framework. This framework is the first work that focuses on verifying {\dnn}s with under-approximation and it has the potential to open new research directions in developing novel verification paradigms for {\dnn}s.
    
    
    \item  The \urfornnv framework randomly under-approximates a sub-polytope of the exact output region of \relu\ {\dnn}s and checks it against desired properties during each epoch. It employs an efficient and scalable under-approximation algorithm with a bounded vertex number. Although the \urfornnv framework is particularly recommended for DNN falsification, it can also confirm a property with high confidence under a sufficient number of iterations.
    
    
    \item  Several optimization strategies are developed to improve the efficiency and effectiveness of \urfornnv. These strategies include dimension-priority assignment, pruning strategy, maximal under-approximation, and parallel execution.
    

    \item We have implemented a prototype tool based on the \urfornnv verification framework. The tool was evaluated on widely used ACAS Xu {\dnn} benchmarks, comparing its performance to that of exact and over-approximation verification methods. The experimental results demonstrate a significant improvement in both the effectiveness and efficiency of the proposed tool compared to existing methods.
    
    
\end{itemize}


\section{Preliminaries and Notations}
\label{prel}

Given an $L$-layer \relu\ {\dnn} $\mathcal{N}$, its computation w.r.t. the input $\bm{x}$ can be defined as follows:
\begin{equation}
    \begin{cases}
         \bm{z}^{(l)}= \bm{W}^{(l)} \bm{h}^{(l-1)}+ \bm{b}^{(l)},~ \bm{h}^{(l)}=\sigma( \bm{z}^{(l)})\\
         \bm{h}^{(0)}=\bm{x},~ \mathcal{N}(\bm{x})=\bm{z}^{(L)}\\
         \bm{W}^{{(l)}}\in \mathbb{R}^{n_l\times n_{l-1}},~
         \bm{b}^{(l)} \in \mathbb{R}^{n_l}\\
         l \in \{1,2,\cdots,L-1\},\\
    \end{cases}
\end{equation}
where $\bm{W}^{(l)}$ and $\bm{b}^{(l)}$ respectively represent the weight matrix and bias vector between the $(l-1)$-th layer and the $l$-th layer, forming affine transformations. $n_{l}$ refers to the number of neurons of the $l$-th layer, named layer dimension. Subsequently, the element-wise activation function $\sigma(x)=\reluop(x)=\max(0,x)$  receives the pre-activation value $\bm{z}^{(l)}$ outputted from the affine transformation and generates the activation value $\bm{h}^{(l)}$. The composition of the computation operators involved in all layers is termed the forward propagation, obtaining the final output $\mathcal{N}(\bm{x})$ with respect to the input $\bm{x}$.

{\dnn} verification provides conclusions on whether {\dnn}s satisfy certain properties and reachability analysis is one of the most potent tools for {\dnn} verification. Properties to be verified generally depict two regions: an input region $\mathcal{X}$ and a desired output region $\mathcal{Y}$.  Given an input region $\mathcal{X}$, the output region of a {\dnn} $\mathcal{N}$ is 
$\mathcal{R}(\mathcal{X})=\{\bm{y} \mid \bm{y}=\mathcal{N}(\bm{x}), \ \bm{x}\in \mathcal{X}\}.$ If the output region is contained in the desired output region, i.e., $\mathcal{R}(\mathcal{X})\subseteq\mathcal{Y}$, then the property holds on the {\dnn}. Otherwise, the {\dnn} violates the property. Polytopes are popularly utilized in reachability analysis-based {\dnn} verification.

A polytope $P$ is the convex hull of a finite set of points in Euclidean space $\mathbb{R}^{n}$. Generally speaking, there are two representation methods of polytopes: $\mathcal{H}$-representation and $\mathcal{V}$-representation. With the $\mathcal{H}$-representation, a polytope is defined with finitely many half spaces, described by the form of linear inequalities $P\triangleq\{\bm{x}\mid \bm{Ax} \leq \bm{c},\bm{A}\in \mathbb{R}^{m\times n},\bm{c}\in \mathbb{R}^{m\times 1}\}$, and such that the point set obeys all of them is bounded. For the $\mathcal{V}$-representation, the convex hull is defined with the set of vertices $P\triangleq 
\{\sum_{i=0}^{m}\lambda_{i}\bm{v}_{i}\mid \lambda_{0},\lambda_{1},\cdots, \lambda_{m}>0,\sum_{i=1}^{m}\lambda_i=1\}$. Theoretically, these two representations are equivalent and are interchangeable \cite{avis2009polyhedral}. However, there are no efficient approaches to the representation conversion problem, especially for high-dimensional cases \cite{Bremner2007PolyhedralRC,DBLP:journals/comgeo/WelzlABS97,DBLP:journals/dcg/BremnerFM98,DBLP:journals/mor/MatheissR80}. For a polytope $P$ with dimension $n$, it means that there exist $n+1$ points $\{\bm{v}_0, \bm{v}_1,\ldots,\bm{v}_n\}$ making the set $\{\bm{v}_i-\bm{v_0}\mid i=1,\ldots,n\}$ linearly independent. The following lemmas demonstrate some concerned computation properties of polytopes in \relu\ {\dnn}s. 
\begin{lemma}
\label{poly_aff}
     Affine transformations only change vertices but preserve polytopes' combinatorial structure, i.e., polytopes are closed under affine transformations~\cite{10.5555/285869.285884}.
\end{lemma}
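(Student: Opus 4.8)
The plan is to work directly with the $\mathcal{V}$-representation and show that an affine map sends the convex hull of the vertices to the convex hull of their images, leaving the barycentric coefficients $\lambda_i$ untouched. Let $T(\bm{x}) = \bm{W}\bm{x} + \bm{b}$ denote a generic affine transformation and let $P = \{\sum_{i=0}^{m}\lambda_i \bm{v}_i \mid \lambda_i \geq 0, \sum_{i=0}^m \lambda_i = 1\}$ be a polytope given by its vertices $\bm{v}_0, \ldots, \bm{v}_m$.

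First I would take an arbitrary point $\bm{x} = \sum_{i=0}^m \lambda_i \bm{v}_i \in P$ and compute its image. Applying linearity of $\bm{W}$ gives $T(\bm{x}) = \sum_{i=0}^m \lambda_i \bm{W}\bm{v}_i + \bm{b}$. The crucial step is to rewrite the bias using the partition-of-unity constraint $\sum_i \lambda_i = 1$, namely $\bm{b} = (\sum_i \lambda_i)\bm{b}$, so that $T(\bm{x}) = \sum_{i=0}^m \lambda_i (\bm{W}\bm{v}_i + \bm{b}) = \sum_{i=0}^m \lambda_i T(\bm{v}_i)$. This identity is the heart of the argument: the image of any point of $P$ is the convex combination of the transformed vertices $T(\bm{v}_i)$ with exactly the same weights $\lambda_i$.

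From this identity both inclusions follow immediately. The computation above shows $T(P) \subseteq \mathrm{conv}\{T(\bm{v}_0), \ldots, T(\bm{v}_m)\}$, and conversely any convex combination $\sum_i \lambda_i T(\bm{v}_i)$ equals $T(\sum_i \lambda_i \bm{v}_i)$, the image of a point of $P$; hence $T(P) = \mathrm{conv}\{T(\bm{v}_0), \ldots, T(\bm{v}_m)\}$, which is by definition a polytope. This establishes closure under affine maps, and because the coefficients $\lambda_i$ are carried over unchanged while only the vertex coordinates are altered ($\bm{v}_i \mapsto T(\bm{v}_i)$), the generating parametrization of the polytope is preserved.

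The one point requiring care — and the main obstacle to the stronger reading of ``preserving combinatorial structure'' — is that a non-injective $\bm{W}$ (as occurs when a {\dnn} layer reduces dimension) may collapse distinct vertices onto a common image, so that the face lattice is genuinely simplified rather than preserved. I would therefore state the result at the level of the $\mathcal{V}$-representation map $\bm{v}_i \mapsto T(\bm{v}_i)$, which is what the reachability construction actually relies on, and remark that full preservation of the face lattice holds exactly when $T$ is injective on the affine hull of $P$. For the verification pipeline this weaker, always-valid form suffices, since it guarantees that the transformed reachable set is again a polytope described by the transformed generating points.
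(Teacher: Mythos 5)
Your proof is correct, but note that the paper itself does not prove this lemma at all --- it is stated as a known fact with a citation to the polytope literature, and no argument appears in the body or the appendix. Your convex-combination computation, $T\bigl(\sum_i \lambda_i \bm{v}_i\bigr) = \sum_i \lambda_i T(\bm{v}_i)$ via the partition-of-unity trick $\bm{b} = (\sum_i \lambda_i)\bm{b}$, is the standard textbook argument and establishes exactly the closure property the paper relies on: the image of a $\mathcal{V}$-represented polytope is again a polytope generated by the transformed points. Your caveat is also well taken and is in fact a mild correction to the lemma as literally stated: when $\bm{W}$ is not injective on the affine hull of $P$ (as happens whenever a layer reduces dimension), transformed generating points may cease to be vertices and the face lattice can collapse, so ``preserves combinatorial structure'' holds only for injective affine maps. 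The weaker, unconditional form you isolate --- closure under affine images at the level of generating points --- is all that the layer-wise vertex propagation in the paper actually uses, so nothing downstream is affected.
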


\begin{lemma}
\label{poly_relu}
     For \relu\ activations $y=\max(x,0)$, if the input $x$ is a union of polytopes, then the output $y$ is also a union of polytopes~\cite{DBLP:journals/corr/abs-1712-08163}.
\end{lemma}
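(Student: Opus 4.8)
The plan is to exploit the piecewise-linear nature of the element-wise \relu\ map and reduce the claim to the two polytope facts that are already available: that the intersection of polytopes is again described by finitely many half-spaces (hence a polytope), and that affine images of polytopes are polytopes (Lemma~\ref{poly_aff}). First I would dispose of the ``union'' by noting that $\reluop$, viewed as a set map, commutes with unions: if the input set is $\bigcup_j P_j$, then its image is $\bigcup_j \reluop(P_j)$. It therefore suffices to prove the statement for a single polytope $P \subseteq \mathbb{R}^n$ and then take the finite union of the resulting output polytopes.

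For a single polytope $P$, the next step is to partition the ambient space by the activation pattern of the $n$ coordinates. For each subset $S \subseteq \{1,\dots,n\}$ I would define the region $O_S = \{\bm{x} \mid x_i \ge 0 \text{ for } i \in S,\ x_i \le 0 \text{ for } i \notin S\}$, which is an intersection of $n$ closed half-spaces. The $2^n$ regions $O_S$ cover $\mathbb{R}^n$, overlapping only on the coordinate hyperplanes $\{x_i = 0\}$. Although each $O_S$ is itself unbounded (an orthant), the intersection $P \cap O_S$ is bounded because $P$ is, so $P \cap O_S$ is a (possibly empty) polytope given by combining the half-space descriptions of $P$ and $O_S$.

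On each piece $P \cap O_S$ the sign of every coordinate is fixed, so $\reluop$ coincides there with the single linear map $M_S$ that keeps the coordinates indexed by $S$ and zeroes the rest (a diagonal $0/1$ matrix). The image $M_S(P \cap O_S)$ is then a polytope: since a linear map sends the convex hull of finitely many points to the convex hull of their images, the $\mathcal{V}$-representation of $P \cap O_S$ is carried to a finite vertex set spanning $M_S(P \cap O_S)$. Collecting the pieces gives $\reluop(P) = \bigcup_{S \subseteq \{1,\dots,n\}} M_S(P \cap O_S)$, a finite union of polytopes, which extends to the union case by the reduction above.

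The step I expect to require the most care is the treatment of the overlaps on $\{x_i = 0\}$, where a coordinate may be assigned to either an ``active'' or ``inactive'' pattern. This causes no inconsistency, since at $x_i = 0$ both the identity action and the zeroing action of $M_S$ return $0$; thus $\reluop$ is single-valued on the overlaps and the decomposition genuinely equals the \relu\ image. A secondary subtlety is that $M_S$ is in general non-injective, so Lemma~\ref{poly_aff}'s assertion about \emph{preserving} combinatorial structure does not literally apply; I would therefore lean on the weaker but sufficient fact that the image of a polytope under any linear map is the convex hull of the images of its vertices, hence again a polytope.
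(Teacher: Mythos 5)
Your argument is correct. Note first that the paper itself does not prove Lemma~\ref{poly_relu}; it imports it by citation, so there is no in-paper proof to match against. Your route --- partitioning $\mathbb{R}^n$ into the $2^n$ closed orthants $O_S$, observing that $\reluop$ restricted to $P\cap O_S$ is the linear map $M_S$ that zeroes the inactive coordinates, and taking the finite union of the images --- is the standard activation-pattern argument and is sound. The two refinements you flag are exactly the right ones: the overlaps on the hyperplanes $\{x_i=0\}$ are harmless because $\reluop$ is a single-valued function (the image of a cover is the union of the images of its pieces, overlaps or not), and Lemma~\ref{poly_aff} as stated (preservation of combinatorial structure) cannot be invoked for the non-injective $M_S$, so falling back on ``a linear image of a polytope is the convex hull of the images of its vertices'' is necessary and sufficient. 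The one presentational difference from the paper's own machinery is that the paper everywhere works with the factorization $\reluop=\reluop_{d-1}\circ\cdots\circ\reluop_0$ and splits each polytope coordinate-by-coordinate into at most two pieces ($P_d^{+}$ above $\mathcal{H}_d$ and $P_d^{-}$ within it), so that the orthant decomposition arises as the leaves of a depth-$n$ binary tree rather than in one shot. The two views are equivalent; yours yields the $2^n$ bound on the number of output polytopes immediately, while the paper's sequential version is the one that its under-approximation algorithm actually instantiates (by following a single branch of that tree). Either way the lemma follows, and your proof is complete as written.
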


 We use lowercase letters like $a$, $b$, $c$ to range over scalars, bold lowercase letters such as $\bm{b}$, $\bm{s}$ and  $\bm{x}$ to range over vectors, and bold uppercase letters to refer to matrices, such as $\bm{M}$, $\bm{V}$ and $\bm{N}$.
For vector $\bm{b}$, $\bm{b}_{i}$ refers to the $(i+1)$-th scalar of $\bm{b}$. For matrix $\bm{M}$, $\bm{M}_{i,:}$ (resp. $\bm{M}_{:,j}$)  stands for the $(i+1)$-th row (resp. the $(j+1)$-th column) of $\bm{M}$ and $\bm{M}_{i,j}$ denotes the element located in the $i+1$-th row and $j+1$-th column. Moreover, we denote the number of the rows of $\bm{M}$ with $|\bm{M}|$.
\section{Methodology}
\label{method}

\subsection{Dilemma and Insight}

\begin{figure}[htbp]
    \centering
    \includegraphics[scale=0.42]{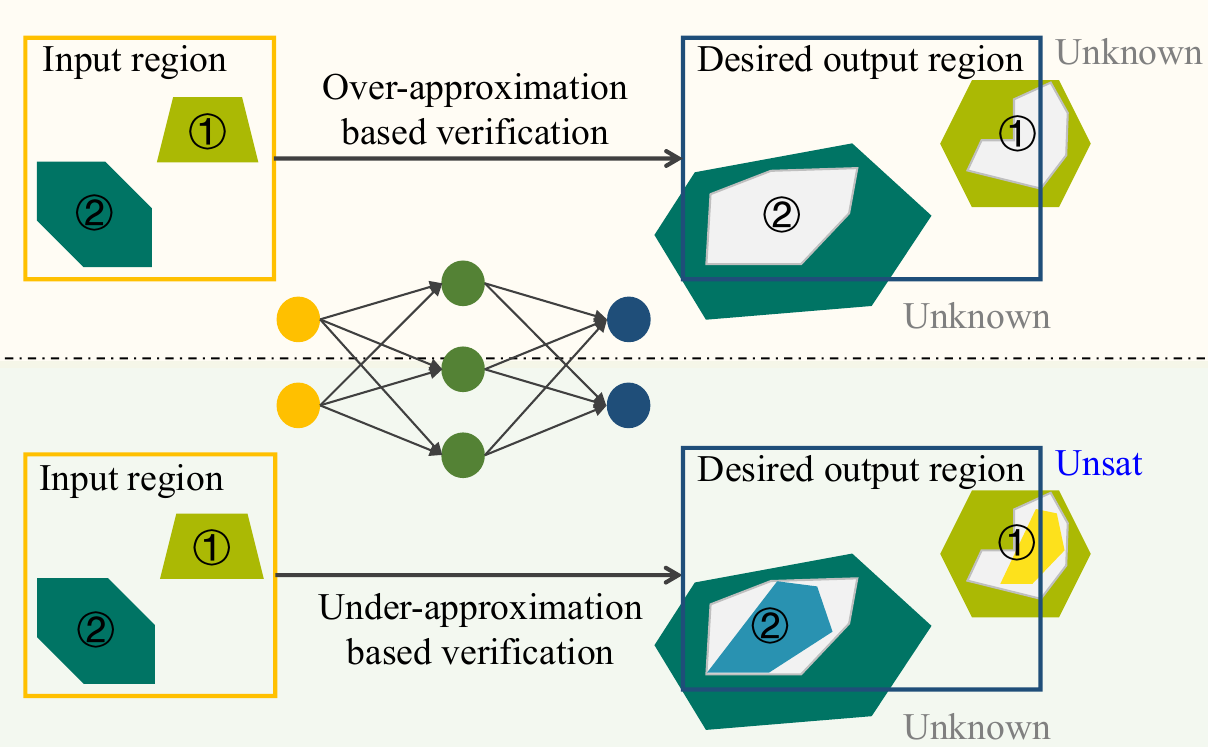}
    \caption{Illustration on the unknown dilemma and coping insight.}
    \label{fig:enter-label}
\end{figure}

Combining Lemma \ref{poly_aff} and \ref{poly_relu}, it can be concluded that the output reachable region of \relu DNNs is a union of polytopes. However, the number of polytopes consisting of the final output reachable region is prohibitive ~\cite{10.5555/2969033.2969153,Serra2017BoundingAC} and the computation of the exact reachable region is NP-hard~\cite{10.1007/978-3-319-63387-9_5}. Consequently, the over-approximation techniques come readily. Given an input region $\mathcal{X}$ and a desired output region $\mathcal{Y}$ with respect to property $p$, over-approximation on \dnn $\mathcal{N}$ aims to find an outer-approximated region $\Omega(\mathcal{X})$ of the exact reachable region $\mathcal{R}(\mathcal{X})$, i.e., $\mathcal{R}(\mathcal{X})\subseteq \Omega(\mathcal{X})$. Then, under the condition that the superset of the output region is contained in the desired output region, property $p$ holds on the\dnn $\mathcal{N}$, i.e., 
\begin{equation}
\label{eq:over}
    \Omega(\mathcal{X})\subset \mathcal{Y} \Rightarrow \mathcal{N}\models p.
\end{equation}

However, over-approximation does not always work. Under the condition that $\Omega(\mathcal{X})\not\subset \mathcal{Y}$, the verification fails in seeking out whether the exact output region $\mathcal{R}(\mathcal{X})$ or the introduced approximation error $\Omega(\mathcal{X})\setminus\mathcal{R}(\mathcal{X})$ violates the desired output region, leading to the \textit{Unknown dilemma}. The top part of Figure~\ref{fig:enter-label} visualizes these claims, where the grey areas are the exact output regions and other colored areas are the over-approximated output regions of the corresponding input regions. It can be seen that the first case actually violates the given property and the second obeys, while the over-approximation based verification cannot distinguish them.

To alleviate this dilemma, we resort to the under-approximation based verification, which essentially concentrates on constructing an under-approximated region $\Theta(\mathcal{X})$ of the output reachable set $\mathcal{R}(\mathcal{X})$, i.e., $\Theta(\mathcal{X})\subseteq \mathcal{R}(\mathcal{X})$. If the subset of the exact reachable region is not completely contained in the desired output region, the {\dnn} violates the property. That is to say, 
\begin{equation}
\label{eq:under}
    \Theta(\mathcal{X})\not \subset \mathcal{Y} \Rightarrow \mathcal{N}\not\models p.
\end{equation}
This can help to identify the cases where the exact output region violates the desired output region to some extent. Moreover, on the premise that the under-approximation approaches the exact output polytope set as close as possible, the under-approximation distinguishes between these two case. These claims are shown in the bottom part of Figure~\ref{fig:enter-label}, where the colored areas located in the exact output regions are under-approximation output sets. In what follows, we demonstrate the under-approximation based verification for \relu\ {\dnn}s.

\subsection{Under-approximation of \relu\ {\dnn}s}
We take $\mathcal{V}$-representation formed polytopes to depict the state/output domains during the layer-wise propagation, involving sequential affine transformations and $\reluop$ activation functions. Beginning with a polytope with the vertex matrix $\bm{V}\in \mathbb{R}^{M\times D}$, where $M$ and $D$ respectively represent the number of vertices and the vertex dimension, the affine transformation essentially imposes a vertex-wise operation, namely,
$\bm{W}\cdot \bm{V}_{i,:}+\bm{b}, i\in\{0,1,\cdots,M-1\},$
where $\bm{W}$ and $\bm{b}$ are the weight matrix and bias vector of two adjacent {\dnn} layers. The affine transformation performs exact computation and outputs a polytope for the subsequent \relu operator.

In contrast, we adopt an under-approximation computation style for \relu\ operators due to the prohibitive burden of exact computation. $\reluop$ is an element-wise operator, i.e., $\reluop(\bm{x})=\reluop_{d-1}(\bm{x}) \circ \cdots \circ \reluop_{1}(\bm{x}) \circ\reluop_{0}(\bm{x}),$ where $d$ is the dimension number of $\bm{x}$ and
 $\reluop_i(\cdot)$ is the operation applying $\reluop$ on the $(i+1)$-th dimension. Let $\bm{e}_i=(\underbrace{0,\ldots,0}_{i}, 1,0,\ldots,0)^T$ and  $\mathcal{H}_i$ be the coordinate plane $\{\bm{x}\mid\bm{x}^T\bm{e}_i=0\}$. It is notable that $\reluop_i(P)$ may contain at most two polytopes: the one ``above'' $\mathcal{H}_i$ (denoted by ${P}_i^{+}$) and the one ``within'' $\mathcal{H}_i$ (denoted by ${P}_i^{-}$), respectively resulting from the top part $\hat{P}_{i}^{+}$ and bottom part $\hat{P}_{i}^{-}$ of polytope $P$ w.r.t. hyperplane $\mathcal{H}_i$, as illustrated in Figure~\ref{relu_operator}. Although it can be observed that the dimension of ${P}_i^{+}$ is generally larger than that of ${P}_i^{-}$, indicating that the volume of ${P}_i^{-}$ compared to ${P}_i^{+}$ can be neglected, however, this may be temporary. During the subsequent computation, both affine transformations and $\reluop$ operations can potentially reduce the dimensions of polytopes, and ${P}_i^{-}$ may regain the advantageous position. Consequently, it is inadvisable to blindly discard ${P}_i^{-}$, while, considering both ${P}_i^{+}$ and ${P}_i^{-}$ simultaneously leads to an exponential increase in the number of polytopes. Therefore, instead of traversing the binary computation tree of $\reluop$ operators, we propose a random branch based under-approximation algorithm for $\reluop$ operators.

\begin{figure}[htbp]
	\centering
	\includegraphics[width=.47\textwidth]{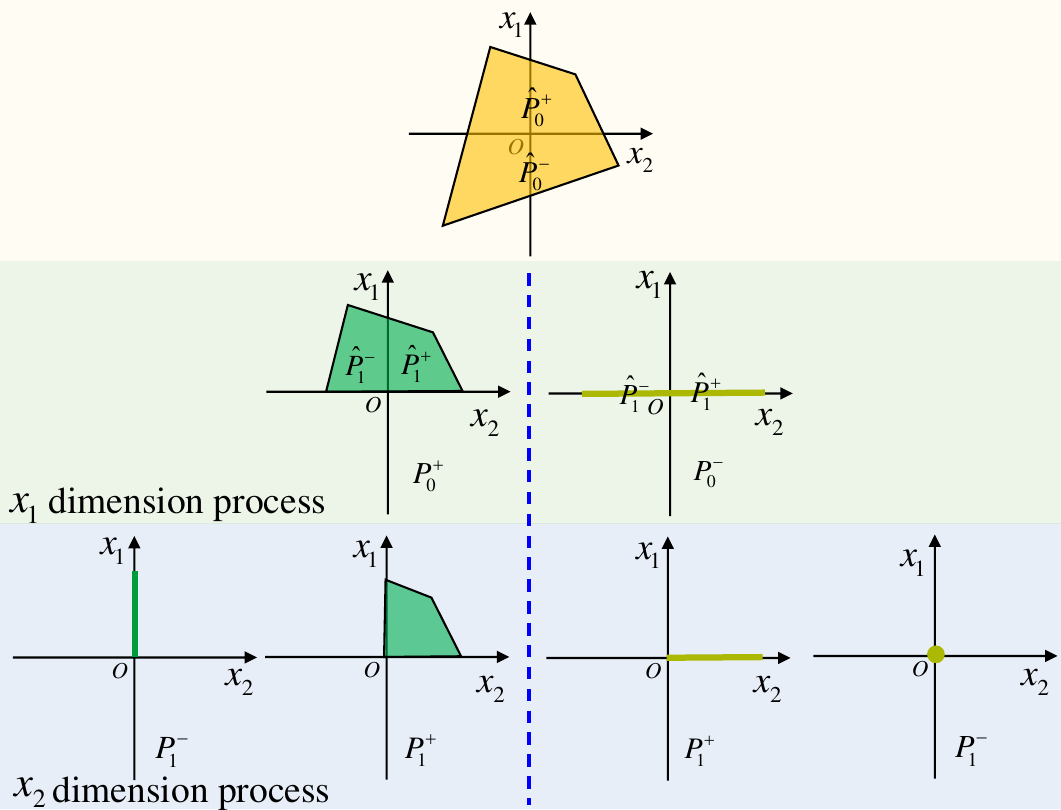}
	\caption{Demonstrations of $\reluop$ operators}
	\label{relu_operator}
\end{figure}

The main idea of under-approximation computation of $\reluop$ operators is under-approximating one of the binary tree branches randomly, which is demonstrated in Alg. \ref{alg:underapp} and carried out dimension by dimension (Line \ref{dimension-u}). When operating on the $d$-th dimension, Alg. \ref{alg:underapp} examines the coordinate values of all vertices in the dimension and categorizes them into three cases: all positive, all negative, and mixed positive and negative values. For the first two cases, Alg. \ref{alg:underapp} applies the $\reluop$ operator to the $d$-th dimension and returns the processed vertices for the $(d+1)$-th dimension (Line \ref{case}-\ref{underrelu}). In the case of mixed positive and negative values, Alg. \ref{alg:underapp} first divides the vertices into a positive vertex set and a negative vertex set based on the sign of coordinate values (Line \ref{dividepn_s}-\ref{dividepn_e}). The fundamental thought behind the under-approximation involves finding a point on the coordinate plane to replace each negative (resp. positive) vertex to under-approximate ${P}_d^{+}$ (resp. ${P}_d^{-}$). To achieve this, taking the replacement of negative vertices as an example case (Line \ref{true case}), a point set $S_{int}$ is introduced for each negative vertex, containing the intersection points between the segments connecting the negative vertex to all positive vertices and the coordinate hyperplane $\mathcal{H}_d$ (Line \ref{inter_s}-\ref{inter_e}). After computing the sets of intersection points for each negative vertex, one point from each set is selected to replace the corresponding negative vertex, with the criterion being the point with the greatest distance from the previously selected point (Line \ref{select_s}-\ref{select_e}). Finally, the union of the set of positive vertices and the vertex set with replaced vertices is returned for the subsequent dimension processing (Line \ref{union}). As for the replacement of positive vertices (Line \ref{false case}), the same process goes except for returning the set union of the projection points on $\mathcal{H}_d$ of the negative vertices and the replacement vertices (Proj$_d(\bm{x})$ is the operator that projects $\bm{x}$ to hyperplane $\mathcal{H}_d$).  Theorem \ref{thm:under} guarantees the soundness of Alg.~\ref{alg:underapp},  whose proof is provided in Appendix~\ref{proof sound}.

\begin{algorithm}[!ht]
    \renewcommand{\algorithmicrequire}{\textbf{Input:}}
    \renewcommand{\algorithmicensure}{\textbf{Output:}}
    \caption{Under-approximation of the \relu\ operator}
    \label{alg:underapp}
    \begin{algorithmic}[1] 
    \REQUIRE A polytope $P$ with vertex matrix $\bm{V}\in \mathbb{R}^{M\times D}$
    \ENSURE An under-approximation polytope with vertex matrix $\bm{V}^{u}$ of $\reluop(P)$.
        \STATE $\bm{V}^u = \bm{V}$ 
        \FOR{$d=0$ \textbf{to} $D-1$} \label{dimension-u}
        \IF {$\min(\bm{V}^{u}_{:,d})\geq 0$ \textbf{or} $\max(\bm{V}^{u}_{:,d})< 0$} \label{case}
        \STATE $\bm{V}^{u}=\text{ReLU}_d(\bm{V}^{u})$ \label{underrelu}
        \ELSE
         \STATE ${S}_p=\emptyset,{S}_n=\emptyset, S_{proj}=\emptyset, S_{can}=\emptyset, S_{r}=\emptyset$
        \FOR{$m=1$ \textbf{to} $|\bm{V}^{u}|$}  \label{dividepn_s}
        \IF{$\bm{V}^{u}_{m,d}\geq 0$}
        \STATE ${S}_p={S}_p\cup \{ \bm{V}^{u}_{m,:} \}$
        \ELSE
        \STATE  ${S}_n={S}_n\cup \{ \bm{V}^{u}_{m,:} \}$
        \STATE  ${S}_{proj}={S}_{proj}\cup \{ \text{Proj}_d{(\bm{V}^{u}_{m,:})} \}$ \label{projop}
        \ENDIF
        \ENDFOR \label{dividepn_e}
        \STATE Randomly generate a boolean variable $flag$
        \IF{$flag$} \label{true case}
        \STATE $S=S_n, T=S_p, S_{proj}=S_p$
        \ELSE
        \STATE $S=S_p, T=S_n$ \label{false case}
        \ENDIF
        \FOR{$\bm{s}$ \textbf{in} $S$} \label{inter_s}
        \STATE $S_{int}=\emptyset$
        \FOR {$\bm{t}$ \textbf{in} $T$}
        \STATE  $seg=\alpha \bm{s}+(1-\alpha) \bm{t}, \alpha\in[0,1]$
        \STATE $S_{int}=S_{int}\cup \{seg \cap \mathcal{H}_d \}$
        \ENDFOR
        \STATE $S_{can}=S_{can}\cup \{S_{int}\}$
        \ENDFOR  \label{inter_e}
        
        \FOR{$S$ \textbf{in} $S_{can}$} \label{select_s}
        \IF{$S_{r}=\emptyset$}
        \STATE randomly select $s_{r}\in S$ and $S_{r}=S_{r}\cup \{s_r\}$
        \ELSE 
        \STATE $s_{r}\in S$ that is farthest to $S_{r}$
        and $S_{r}=S_{r}\cup \{s_r\}$
        \ENDIF
        \ENDFOR \label{select_e}
        \STATE  Initialize $\bm{V}^u\in \mathbb{R}^{\#S^{u}\times D}$ with  $S^u={S}_{proj} \cup S_r$.  \label{union}
        \ENDIF
        \ENDFOR
       
        \STATE \textbf{return} $\bm{V}^u$
    \end{algorithmic}
\end{algorithm}

\begin{theorem}(Soundness guarantee)
\label{thm:under}
The polytope with vertex matrix  $\bm{V}^u$ returned by Alg. \ref{alg:underapp} is an under-approximation polytope of $\reluop(P)$.
\end{theorem}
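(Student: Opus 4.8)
The plan is to exploit the element-wise factorization $\reluop = \reluop_{D-1} \circ \cdots \circ \reluop_0$ and reduce the global claim to a single-dimension statement. For an arbitrary input polytope $\hat{P}$, write $\iter_d(\hat{P})$ for the polytope produced by one execution of the loop body (Line~\ref{dimension-u} onward) on dimension $d$. I would first establish the local containment $\iter_d(\hat{P}) \subseteq \reluop_d(\hat{P})$ for every $d$. Since $\reluop_d$ is an honest function on points, its set-image is monotone, i.e. $\hat{P}\subseteq\hat{P}'$ implies $\reluop_d(\hat{P})\subseteq\reluop_d(\hat{P}')$; a routine induction on $d$ then chains the local containments into $\mathrm{conv}(\bm{V}^u)\subseteq\reluop(P)$, the sought global inclusion. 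This argument tolerates the fact that the intermediate images $\reluop_d(\hat{P})$ need not be convex, because each algorithmic polytope $\iter_d(\hat{P})$ is itself a single convex set fed forward to the next iteration.

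For the two pure-sign branches (Lines~\ref{case}--\ref{underrelu}) the step is in fact exact. Because the $d$-th coordinate is a linear functional, its extreme value over $\hat{P}$ is attained at a vertex; hence all-nonnegative vertices force the whole polytope above $\mathcal{H}_d$, where $\reluop_d$ is the identity, while all-negative vertices force it strictly below $\mathcal{H}_d$, where $\reluop_d$ coincides with the projection $\mathrm{Proj}_d$. Since projection is affine and therefore commutes with taking the convex hull of the vertex set, in both subcases $\iter_d(\hat{P}) = \reluop_d(\hat{P})$, trivially giving the containment.

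The substantive case is the mixed-sign branch. Here I would decompose $\reluop_d(\hat{P}) = \reluop_d(\hat{P}_d^+)\cup\reluop_d(\hat{P}_d^-) = \hat{P}_d^+\cup\mathrm{Proj}_d(\hat{P}_d^-)$, observing that each piece $\hat{P}_d^+$ and $\mathrm{Proj}_d(\hat{P}_d^-)$ is convex. The random $flag$ merely selects which convex piece the algorithm under-approximates, so it suffices to show that the returned vertex set lies entirely inside the targeted piece. When $flag$ is true the output vertices are $S_p\cup S_r$: the positive vertices $S_p$ already lie in $\hat{P}_d^+$, and each point of $S_r$ is an intersection $seg\cap\mathcal{H}_d$, i.e. a convex combination of a positive and a negative vertex of $\hat{P}$ sitting on $\mathcal{H}_d$, hence a point of $\hat{P}$ with zero $d$-th coordinate, which again lies in $\hat{P}_d^+$. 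When $flag$ is false the output is $S_{proj}\cup S_r$, where $S_{proj}$ collects the projections of negative vertices (members of $\mathrm{Proj}_d(\hat{P}_d^-)$) and the points of $S_r$ lie on $\mathcal{H}_d\cap\hat{P}\subseteq\hat{P}_d^-$, so they equal their own projections and belong to $\mathrm{Proj}_d(\hat{P}_d^-)$. Convexity of the targeted piece then forces $\mathrm{conv}(S^u)$ into it, and \emph{a fortiori} into $\reluop_d(\hat{P})$.

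I expect the main obstacle to be the careful bookkeeping of the mixed-sign branch---confirming that every constructed vertex genuinely falls in the intended convex piece, and that each segment--hyperplane intersection is a well-defined convex combination (a segment joining a strictly positive to a strictly negative $d$-th coordinate crosses $x_d=0$ at a unique interior point). A remark that keeps the argument short is that the farthest-point selection heuristic (Lines~\ref{select_s}--\ref{select_e}) is irrelevant to soundness: any choice of one representative per negative (resp. positive) vertex still yields points inside the targeted piece, so the heuristic influences only the tightness, i.e. the volume, of the under-approximation, never its correctness.
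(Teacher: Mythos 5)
Your proposal is correct and follows essentially the same route as the paper's proof: a per-dimension analysis that identifies the two candidate pieces $P_d^+$ and $\mathrm{Proj}_d(P_d^-)$, checks that every constructed vertex ($S_p$ or $S_{proj}$, plus the segment--hyperplane intersections in $S_r$) lies in the piece selected by $flag$, and invokes convexity of that piece to contain the whole returned polytope. Your explicit monotonicity-plus-induction argument for chaining the per-dimension containments is a welcome tightening of the paper's closing remark that the multi-dimensional conclusion is ``immediate,'' but it is an elaboration rather than a different approach.
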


\noindent \textbf{Complexity analysis.}
For one thing, Alg.~\ref{alg:underapp} involves processing $D$ dimensions, and the time complexity of computing the intersection points for each dimension is $O((M/2)^2)$. Therefore, the worst time complexity is $O(DM^2)$. For another thing, since Alg. \ref{alg:underapp} replaces a negative (resp. positive) vertex with an intersection point and remains the positive (resp. negative) vertices, the space complexity is $O(1)$.

\subsection{\urfornnv Verification Framework}
\label{UR4NNV}
The \urfornnv verification framework are shown in Figure~\ref{ur4nnv framework}, which is fundamentally based on multiple epochs of under-approximation reachability analysis to approximate the exact output region. As expressed in Eqn.\eqref{eq:under}, \urfornnv is more adept at falsifying {\dnn} properties. That is to say, for a property $p:=(\mathcal{X},\mathcal{Y})$ w.r.t. $\mathcal{N}$, once the reachability result $\Theta(\mathcal{X})$ returned by some epoch is not included in the desired output region $\mathcal{Y}$, the verification terminates and returns  $\mathcal{N}\not\models p$. 

Nevertheless, when the under-approximation reachability analysis reaches the specified number of executions (or timeout), and for each execution $\Theta(\mathcal{X})\subseteq \mathcal{Y}$, then \urfornnv incorporates a sample check mechanism to seek counterexamples that violate the properties from a randomly collected sample set $S$ from the input region $\mathcal{X}$ for a final speculative test. If the {\dnn} fails the sample check, i.e., the output of some sample locates outside of $\mathcal{Y}$, \urfornnv also outputs $\mathcal{N}\not\models p$. Otherwise, \urfornnv would prefer to report that the property holds on $\mathcal{N}$ with a confidence level $cl$, i.e., $\mathcal{N}\models p$ with $cl$. The confidence level is the ratio that the outputs corresponding to $S$ located in the union of the under-approximation regions resulting from Alg. \ref{alg:underapp}, i.e., $cl=\frac {\#(\mathcal{N}(\bm{s})\in {\rm cvx}(\cup_{i=1}^{N} P^{U}_i)), \bm{s}\in S}{\#S},$ where $P^{U}_i$ is the output of Alg.~\ref{alg:underapp} in the $i$-th epoch, $N$ is the epoch bound and ${\rm cvx}(\cdot)$ refers the convexhull. The completeness of the Alg.~\ref{alg:underapp} guarantees an upper confidence bound returned by \urfornnv, whose proof is provided in Appendix \ref{proof complete}.

\begin{figure}[!t]
	\centering
	\includegraphics[width=.47\textwidth]{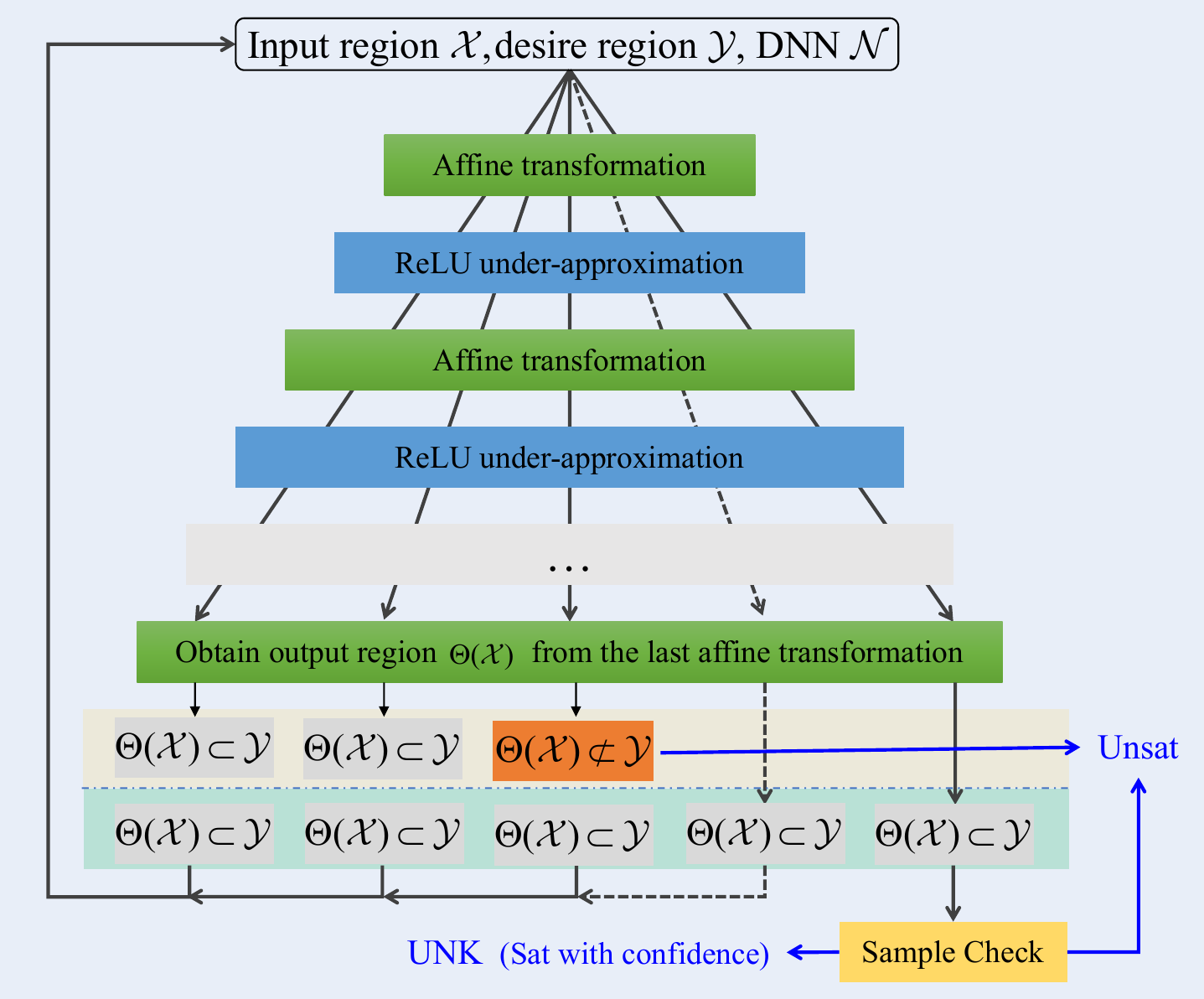}
	\caption{Workflow of \urfornnv framework}
	\label{ur4nnv framework}
\end{figure}

\begin{theorem}\label{thm: complete}(Completeness guarantee) The convex union of resultant polytopes returned by an infinite number of executions of Alg.~\ref{alg:underapp} can cover the exact output region of $\reluop$ operators, i.e., $\reluop(P) \subset {\rm cvx}(\cup_{i=1}^{\infty}P^{U}_i)$.

\end{theorem}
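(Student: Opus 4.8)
The plan is to prove the inclusion by reducing it to a statement about vertices and then combining the soundness of Alg.~\ref{alg:underapp} with an exhaustiveness argument over the random choices. By Lemma~\ref{poly_relu}, the exact image $\reluop(P)$ is a finite union of polytopes $\bigcup_{j} P^E_j$; since every polytope is the convex hull of its vertices and $\mathrm{cvx}(\cdot)$ is monotone, it suffices to show that every vertex of every $P^E_j$ lies in $\bigcup_{i=1}^{\infty} P^U_i$. Indeed, if each such vertex $v$ satisfies $v \in P^U_{i(v)}$ for some epoch $i(v)$, then $P^E_j = \mathrm{conv}(\mathrm{vert}(P^E_j)) \subseteq \mathrm{cvx}(\bigcup_i P^U_i)$ for every $j$, and the theorem follows. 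This reduces the probabilistic statement to a purely existential one: for each exact vertex, exhibit random choices that realize it.

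Next I would make the correspondence between the exact decomposition and the algorithm's branching explicit. Writing $\reluop = \reluop_{D-1}\circ\cdots\circ\reluop_0$, at each mixed dimension $d$ the exact operator splits a polytope $Q$ into the top piece $Q_d^+ = Q \cap \{\bm{x}^T\bm{e}_d \geq 0\}$ and the projected bottom piece $\mathrm{Proj}_d(Q_d^-)$. The Boolean $flag$ drawn in Alg.~\ref{alg:underapp} selects exactly one of these: $flag$ true keeps the positive vertices $S_p$ and returns an under-approximation of $Q_d^+$, whereas $flag$ false returns an under-approximation of $\mathrm{Proj}_d(Q_d^-)$. Hence a full sequence of flags over the mixed dimensions is in bijection with the branches $P^E_j$, and since only finitely many mixed dimensions occur, there are finitely many branches, each chosen in infinitely many epochs.

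The core is a single-dimension covering claim applied inductively along a fixed branch. At a mixed dimension with, say, $flag$ true, the vertices of the exact top piece $Q_d^+$ are the positive vertices $S_p$, which Alg.~\ref{alg:underapp} always retains, together with the points where edges of $Q$ cross $\mathcal{H}_d$. The key observation is that every such crossing is the intersection of a segment joining a positive to a negative vertex with $\mathcal{H}_d$, so it appears in the candidate collection $S_{can}$ built in Lines~\ref{inter_s}--\ref{inter_e}; moreover $S_{can}$ is finite, so each candidate is selectable into $S_r$ in some epoch. I would formalize this as: for every vertex $u$ of the exact branch-intermediate polytope there is a choice sequence whose output contains $u$. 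Chaining this across dimensions by induction—using that, with the branch fixed, the exact map reduces to the linear coordinate-zeroing $L_{\bm{f}}$, which commutes with convex hulls—then shows that the convex hull of the outputs collected along the branch recovers $P^E_j$.

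The main obstacle, and the step I expect to require the most care, is precisely this chaining: each epoch feeds forward an \emph{under-approximation} rather than the exact intermediate polytope, and because $\reluop$ (and intersection with a half-space) does not commute with $\mathrm{cvx}(\cdot)$, one cannot simply push a whole-polytope ``covering'' hypothesis through a single dimension. My intended resolution is to track individual target vertices instead of whole polytopes—showing that for any target vertex of $P^E_j$ one can assemble a consistent sequence of flags and selections realizing it—and to lean on the facts that (i) the crossing points needed along the way are always present among the computed candidates, and (ii) the per-epoch randomness ranges over a finite set, so every realizable choice sequence, hence every target vertex, occurs in some epoch. Combining the per-branch coverings over the finitely many branches yields $\reluop(P) = \bigcup_j P^E_j \subseteq \mathrm{cvx}(\bigcup_i P^U_i)$, as required.
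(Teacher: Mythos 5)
Your overall strategy is the same as the paper's: reduce the inclusion to covering every vertex of every exact sub-polytope, observe that for a single mixed dimension the required vertices are either retained points ($S_p$, or the projections in $S_{proj}$) or intersection points that appear among the candidates in $S_{can}$, and invoke the infinitely many epochs to realize every admissible choice. Where you differ is in candour: the paper disposes of the multi-dimensional case with a single sentence (``it is easy to get a conclusion for the $\reluop$ operator over all dimensions''), whereas you explicitly isolate the chaining across dimensions as the step that does not follow from the single-dimension claim. You are right that this is the crux, but your proposal stops at announcing an ``intended resolution'' (tracking individual target vertices through a consistent sequence of flags and selections) without carrying it out, so the gap remains open in your write-up exactly where it is open in the paper's.

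To see that the gap is substantive and not merely expository: at dimension $d>0$ the candidate intersection points are computed from segments joining vertices of the \emph{under-approximated} intermediate polytope of the current epoch, not of the exact intermediate polytope. A vertex $v$ of the exact branch polytope lying on a codimension-two coordinate subspace (say $x_0=x_1=0$) can only arise as the crossing of a segment both of whose endpoints already satisfy $x_0=0$, i.e., both endpoints must be crossing points selected into $S_r$ at dimension $0$; when the two required crossings belong to the candidate set $S_{int}$ of the \emph{same} negative vertex (e.g., a triangular face with one negative vertex), Lines~\ref{select_s}--\ref{select_e} admit only one of them per epoch, so the pair never co-occurs and $v$ is never a vertex of any $P^U_i$. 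Nor is $v$ automatically recovered by convexification across epochs: every final $P^U_i$ lies in the nonnegative orthant, so a convex combination equal to a point with $x_0=x_1=0$ must use only points with $x_0=x_1=0$, and one is back to the same co-occurrence problem. A complete proof therefore has to argue either that such vertices are always reachable through some other branch (e.g., via the projected bottom piece), or that the selection rule---whose later choices follow the deterministic ``farthest point'' heuristic rather than uniform sampling over $S_{can}$---still realizes every needed combination. Your vertex-tracking plan is the right vehicle for this, but it needs to be executed, not just proposed.
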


\subsection{Optimization Strategy}
Theorem \ref{thm:under} and \ref{thm: complete} guarantee the soundness and completeness of the proposed random branch
based under-approximation algorithm for ReLU operators. To further enhance the efficiency and effectiveness of Alg.~\ref{alg:underapp} on {\dnn} verification, several heuristic optimization strategies are presented following.
 
\textbf{Dimension-priority Assignment.} Due to randomness, the processing order of vertex dimensions can affect the execution results. On the one hand, to achieve the maximum under-approximation polytopes, the priority of the dimensions with mixed positive and negative values is positively correlated with their positive values, greedily ensuring the preservation of a larger output polytope (named maximal first strategy, MF). On the other hand, to avoid getting trapped in local solution regions, all dimensions are randomly permuted before the algorithm starts (named random first strategy, RF).

\textbf{Sub-polytope Pruning.} As shown in Figure~\ref{relu_operator}, for the $d$-th dimension, if ${P}_d^{-}\subseteq {P}_d^{+}$, it is redundant to compute ${P}_d^{-}$ and its subsequent traces. Moreover, the cases where ${P}_d^{-}\subseteq {P}_d^{+}$ can be identified that the projection points of negative vertices of polytope $P$ on hyperplane $\mathcal{H}_d$ are located within $P$. Consequently, with satisfying the projection relationship, we only need to calculate ${P}_d^{+}$, termed complete top polytope strategy (CTP). Another more straightforward pruning strategy is to greedily and shortsightedly compute  ${P}_d^{+}$ and discard ${P}_d^{-}$ directly for dimension $d$, sacrificing completeness while avoiding condition judgment, called top polytope strategy (TP).

\textbf{Maximal Under-approximation.} During selecting replacement points for positive/negative vertices, it is recommended to make multiple selections and maintain the set of replacement points with the greatest distance. The maximal under-approximation strategy (MUA) is consistently available and helps approach the exact output region more closely.

\textbf{Parallel Execution.} The multi-epoch based verification paradigm of \urfornnv naturally supports parallel execution, and we recommend adopting parallelism, as it allows for further reduction in verification time.

\section{Experiments}
\label{exp}
In this section, we take the ACAS Xu {\dnn}s as the study case, which is a widely recognized and extensively evaluated {\dnn} safety verification problem~\cite{2019acasxu,10.1007/978-3-030-53288-8_1}. ACAS Xu {\dnn}s contain an array of 45 {\dnn}s (organized as a $5 \times 9$ {\dnn} array, $\mathcal{N}_{1,1},\cdots,\mathcal{N}_{1,9},\mathcal{N}_{2,1},\cdots,\mathcal{N}_{5,9}$) that produces maneuver advisories of the unmanned version of Airborne Collision Avoidance System X, which is a highly safety-critical system developed by the Federal Aviation Administration. In detail, these {\dnn}s take a five-dimensional input, representing the scenario around the aircraft, and output a five-dimensional prediction, indicating five possible advisories. These ACAS Xu {\dnn}s are associated with ten safety properties, $\phi_1,\ \phi_2,\ \cdots,\ \phi_{10}$, requiring that the output corresponding to an input within a specific input region must fall in the given safe region(s). The safety description, formal definition, and test {\dnn}s of these properties are supplemented in Appendix \ref{DNN property}, together with the DNN structure.

All the experiments herein are carried out on a 16-core (AWS r5d.4xlarge) machine with 120 GB of memory, equipped with Ubuntu 20.04 and Matlab 2022b. 

\subsection{Falsifying {\dnn} Property}

As clarified in Section \ref{UR4NNV}, the \urfornnv verification framework is strongly recommended to falsify a {\dnn} property. Consequently, in this subsection, we demonstrate the effectiveness and efficiency of \urfornnv in terms of falsifying {\dnn} properties, which means that \urfornnv returns the deterministic \textsc{Unsafe} conclusion. 

\begin{table}[!t]
\centering
\caption{
{ Comparison with state-of-the-art verification methods.}}
\label{exp:false}
\begin{footnotesize}
\setlength{\tabcolsep}{0.01mm}{
\begin{tabular}{c|clcccc}
    \toprule
    \multicolumn{1}{c}{\multirow{2}{*}{\textbf{Paradigm}}} & \multirow{2}{*}{\textbf{Method}} & \multicolumn{1}{c}{\multirow{2}{*}{\textbf{Task}}} & \multicolumn{3}{c}{\textbf{VC}}                                                              & \multicolumn{1}{c}{\multirow{2}{*}{\textbf{VT}}} \\ \cmidrule(lr){4-6}
\multicolumn{1}{c}{}                                   &                                  & \multicolumn{1}{c}{}                               & \multicolumn{1}{c}{\textbf{\safe\ }} & \multicolumn{1}{c}{\textbf{\unsafe\ }} & \multicolumn{1}{c}{\textbf{\UNK}} & \multicolumn{1}{c}{}                             \\ 
    \hline
    \multirow{10}{*}{\makecell[c]{SMT Solver \\ Based \\  Verification}} 
      &  \multirow{5}{*}{\makecell[c]{Reluplex\\ \textcolor{blue}{41/44}}}  & $\phi_2$ 36 NNs &  1 &  34 & 1  & 194,478 \\
     &   & $\phi_3$ 3 NNs  &  0 & 3  &  0 &  11.58\\
     &   & $\phi_4$ 3 NNs  &  0 & 3  &  0 & 12.35 \\
     &  & $\phi_7$ 1 NN  &  0  & 0  & 1  & Timeout\\
    &  &  $\phi_8$ 1 NN &  0  & 0  & 1  & Timeout \\
    
    \cline{2-7}
 &  \multirow{5}{*}{\makecell[c]{Marabou/ \\
Marabou \\ DnC  \\ \textcolor{red}{44/44}}}  & $\phi_2$ 36 NNs &  1 &  35 & 0  & \textcolor{blue}{43,892} \\
     &   & $\phi_3$ 3 NNs  &  0 & 3  &  0 &  \textcolor{blue}{7.12}\\
     &   & $\phi_4$ 3 NNs  &  0 & 3  &  0 & \textcolor{blue}{8.81} \\
     &  & $\phi_7$ 1 NN  &  0  & 1  & 0  & \textcolor{red}{4,847}\\
    &  &  $\phi_8$ 1 NN &  0  & 1  & 0  & \textcolor{blue}{3,761} \\
    \hline
     \multirow{5}{*}{\makecell[c]{Exact \\ Reachability \\ Based \\ Verification}} 
      &  \multirow{5}{*}{\makecell[c]{NNV \\ Exact \\ Star \\ 38/44}}    & $\phi_2$  36 NNs &  1 &  29 & 6  & 403,472 \\
     &   & $\phi_3$ 3 NNs  &  0 & 3  &  0 & 38,537 \\
     &   & $\phi_4$ 3 NNs  &  0 & 3  &  0 & 40,061 \\
     &  & $\phi_7$ 1 NN  &  0  & 1  & 0  & 13,244\\
    &  &  $\phi_8$ 1 NN &  0  & 1  & 0  & 11,350 \\
    \hline

    \multirow{15}{*}{\makecell[c]{Over-appr. \\ Reachability \\ Based \\ Verification}} 
      &  \multirow{5}{*}{\makecell[c]{Zonotope \\ \\ 0/44}}   & $\phi_2$  36 NNs &  0 &  0 & 36  &  0.87 \\
     &   & $\phi_3$ 3 NNs  &  0 & 0  &  3 &  0.08\\
     &   & $\phi_4$ 3 NNs  &  0 & 0  &  3 & 0.08 \\
     &  & $\phi_7$ 1 NN  &  0  & 0  & 1  & 0.13 \\
    &  &  $\phi_8$ 1 NN &  0  & 0  & 1  & 0.23 \\
    
    \cline{2-7}
 &  \multirow{5}{*}{\makecell[c]{Abstract\\ Domain \\ \\ 0/44}}  & $\phi_2$  36 NNs &  0 &  0 & 36  & 3.60 \\
     &   & $\phi_3$ 3 NNs  &  0 & 0  &  3 & 0.25 \\
     &   & $\phi_4$ 3 NNs  &  0 & 0  &  3 & 0.17 \\
     &  & $\phi_7$ 1 NN  &  0  & 0  & 1  & 0.27 \\
    &  &  $\phi_8$ 1 NN &  0  & 0  & 1  & 0.22 \\

    \cline{2-7}
 &  \multirow{5}{*}{\makecell[c]{NNV \\ Appr. \\ Star \\ 0/44}}  & $\phi_2$  36 NNs &  0 &  0 & 36  &  746.23 \\
     &   & $\phi_3$ 3 NNs  &  0 & 0  &  3 & 0.55 \\
     &   & $\phi_4$ 3 NNs  &  0 & 0  &  3 & 0.63 \\
     &  & $\phi_7$ 1 NN  &  0  & 0  & 1  & 32.46 \\
    &  &  $\phi_8$ 1 NN &  0  & 0  & 1  & 17.76 \\
    
    \hline
      
      \multirow{5}{*}{\makecell[c]{ Under-appr. \\ Reachability \\ Based \\ Verification}} 
      &  \multirow{5}{*}{\makecell[c]{\textbf{\urfornnv}\\ \textbf{39/44}} } & $\phi_2$  36 NNs &  0 &  32 & 4  &  \textcolor{red}{1,027.04$\pm$40.63}\\
     &   & $\phi_3$ 3 NNs  &  0 & 3  &  0 & \textcolor{red}{0.13$\pm$0.01} \\
     &   & $\phi_4$ 3 NNs  &  0 & 3  &  0 & \textcolor{red}{0.14$\pm$0.01}  \\
     &  & $\phi_7$ 1 NN  &  0  & 0  & 1  & Timeout \\ 
    &  &  $\phi_8$ 1 NN &  0  & 1  & 0  & \textcolor{red}{5.82$\pm$4.87} \\
    
    \bottomrule
\end{tabular}}
\begin{tablenotes}
\item Note: timeout deadlines for Reluplex, NNV Exact Star and \urfornnv are set as 12 hours, 12 hours and 1 minute respectively. 
\end{tablenotes}
\end{footnotesize}

\end{table}

We compare \urfornnv with well-known {\dnn} verification methods, including Reluplex~\cite{10.1007/978-3-030-25540-4_26}, Marabou/Marabou DnC~\cite{10.1007/978-3-319-63387-9_5}, NNV Exact Star~\cite{10.1145/3358230,10.1007/978-3-030-30942-8_39,10.1007/978-3-030-53288-8_1}, Zonotope~\cite{10.5555/3327546.3327739}, Abstract domain~\cite{10.1145/3290354} and NNV approximation star~\cite{10.1007/978-3-030-53288-8_1}. Among these comparison baselines, Reluplex and Marabou/Marabou DnC are SMT solver based verification methods, and NNV exact star verifies {\dnn} property with exact reachability analysis. All these three methods can provide theoretically sound and complete verification conclusions, which can be regarded as ground truth. Moreover, Zonotope, Abstract domain and NNV approximation star are over-approximation reachability based verification strategies. These methods are all implemented in the NNV verification tool~\cite{10.1007/978-3-030-53288-8_1}, and we refer readers to Section \ref{work} for a detailed introduction of these related verification frameworks.

\begin{table}[ht]
\caption{Performance of integrating \urfornnv with existing methods}
\label{improve_result}
\centering
\begin{footnotesize}
\setlength{\tabcolsep}{2.5mm}{
\begin{tabular}{ccccc}
\toprule
\multirow{2}{*}{\makecell[c]{\textbf{Method(X)}}} &\multicolumn{2}{c}{\textbf{X}}& \multicolumn{2}{c}{\textbf{\urfornnv+X}} \\ \cmidrule(lr){2-3} \cmidrule(lr){4-5}  
                 &  Tasks &  VT   & Tasks &  VT     \\
\hline
Reluplex & 41/44 &     280,901    &   42/44  &   65,946 \\ 
Marabou &  44/44 &    52,515   &    44/44   &  10,861  \\
NNV Exact Star & 38/44  &  506,664     &  40/44  & 59,212  \\
\hdashline
Zonotope & 0/44  &    1.39   &     39/44  &  1,140  \\
Abstract Domain &  0/44 &   4.51    &    39/44   &  1,143   \\
NNV Appr. Star & 0/44  &   798    &    39/44   &  1,936   \\

 \bottomrule                   
\end{tabular}}
\end{footnotesize}
\end{table}

Due to its inherent stochastic nature, \urfornnv is independently tested 20 times for various combinations of properties and {\dnn}s. Once one execution reports \textsc{Unsafe}, the verification conclusion (VC) is marked as \unsafe; otherwise, it is \UNK (timeout). The comparison results are illustrated in Table \ref{exp:false}. For one thing, verification based on SMT solver (Reluplex 41/44, Marabou 44/44) or exact reachability (38/44) can verify properties on almost all tasks, except for a few timeouts. Over-approximation based verification strategies fail on all tasks because of the unknown dilemma. Thus, they return the \UNK result. \urfornnv completes 39 verification tasks out of 44. For another thing, the time consumption of the verification frameworks based on SMT solver or exact reachability is significant, with a magnitude of 1-2 orders, more than that of \urfornnv. Although over-approximation verification costs much shorter time, all the verification tasks failed. The optimal and second-best results are respectively in red and blue.

When using \urfornnv as a preliminary preprocessing for existing {\dnn} verification frameworks, namely, applying \urfornnv to verify {\dnn} properties first, any properties that \urfornnv is unable to verify are then passed on to existing frameworks for further verification. Table \ref{improve_result} presents the completion and verification time of the original frameworks and the \urfornnv-integrated ones on the tasks listed in Table \ref{exp:false}. For exact verification methods, the verification strategy incorporating \urfornnv significantly reduces the verification time, ranging from about 76.5\%, 79.3\% to 88.3\%, while also improving the task completion slightly. As for the over-approximation based verification frameworks, the task completion is entirely attributed to \urfornnv, and the associated verification overhead is considerable. This also indicates that \urfornnv alleviates the ``unknown dilemma" to a great extent.

\subsection{Validating {\dnn} Property with Confidence}
When \urfornnv fails to falsify {\dnn} properties, it turns to provide a confidence level on the satisfiability of properties, and we evaluate this confidence-based DNN verification in this subsection. Recall that the \urfornnv verification framework can provide theoretical completeness under sufficient verification epochs. However, it is difficult to achieve such completeness mainly because Alg.~\ref{alg:underapp} is essentially a binary tree random search procedure, and the search space exponentially increases with the layer dimension and the {\dnn} depth.

\begin{table*}[ht]
\caption{Comparisons among different optimization strategies}
\label{Compare_result}
\centering
\begin{footnotesize}
\setlength{\tabcolsep}{1.2mm}{
\begin{tabular}{ccccccccccccccccc}
\toprule
\multirow{2}{*}{\makecell[c]{\textbf{Property}}} &\multirow{2}{*}{\makecell[c]{\textbf{{\dnn}}}} & \multicolumn{3}{c}{\textbf{Pure}} & \multicolumn{3}{c}{\textbf{RF+TP}} & \multicolumn{3}{c}{\textbf{RF+CTP}} & \multicolumn{3}{c}{\textbf{MF+TP}} & \multicolumn{3}{c}{\textbf{MF+CTP}} \\ \cmidrule(lr){3-5} \cmidrule(lr){6-8} \cmidrule(lr){9-11} \cmidrule(lr){12-14} \cmidrule(lr){15-17}  
                 &   &  VC   & VE         & VT      & VC  & VE         & VT     & VC   & VE         & VT     & VC   & VE         & VT    & VC    & VE         & VT       \\
\hline
\multirow{7}{*}{\makecell[c]{$\phi_2$}} & $\mathcal{N}_{2,1}$ &  \UNK        &   --     &   --      &   \unsafe(14)     &   323      &   42     &    \unsafe(2)      &   120      &   57      &   \textcolor{red}{\unsafe(20)} &      \textcolor{red}{110}   &  \textcolor{red}{7}       & \textcolor{blue}{\unsafe(17)} &     \textcolor{blue}{14}    &       \textcolor{blue}{29}     \\

&  $\mathcal{N}_{2,3}$ &    \UNK      &    --   &    --      &   \textcolor{red}{\unsafe(19)}      &   \textcolor{red}{223}      &  \textcolor{red}{31}        &  \textcolor{blue}{\unsafe(16)}        &     \textcolor{blue}{59}    &     \textcolor{blue}{49}    &    \UNK &  -- &  --  & \UNK  &  -- &  --  \\
&  $\mathcal{N}_{3,5}$ &     \UNK     &     --    &     --     &    \textcolor{red}{\unsafe(20)}    &   \textcolor{red}{ 44}     &  \textcolor{red}{6 }      &      \unsafe(12)     &   32    &  41       &  \unsafe(7)   &  27   & 64 &  \UNK     &     --    &     --  \\
 
 &  $\mathcal{N}_{3,7}$ &     \UNK     &   --     &  --        &    \textcolor{red}{\unsafe(20)}     &     \textcolor{red}{115}    &     \textcolor{red}{14}    &          \textcolor{blue}{ \unsafe(14) } &  \textcolor{blue}{ 91}      &    \textcolor{blue}{36}     &  \UNK  &   -- &  --  & \UNK  &   -- &  --   \\
 &  $\mathcal{N}_{4,3}$ &    \unsafe(14)     &   409   &     38    &      \textcolor{red}{\unsafe(20)}    &   \textcolor{red}{24}      &   \textcolor{red}{3}      &    \unsafe(18)      &       32  &     33    &  \unsafe(1) &  1109 & 60  &  \textcolor{blue}{\unsafe(20)} &   \textcolor{blue}{26}    &     \textcolor{blue}{6}   \\
 &  $\mathcal{N}_{5,4}$ &     \textcolor{red}{\unsafe(20)}     &     \textcolor{red}{32}   &      \textcolor{red}{3}    &   \textcolor{blue} {\unsafe(8)}     &     \textcolor{blue}{326}    &      \textcolor{blue}{48}   &   \unsafe(6) &     125    &    54     & \UNK  & --  &  -- & \UNK  & --  &  --     \\

 &  $\mathcal{N}_{5,9}$ &     \UNK     &    --    &      --     &    \textcolor{red}{\unsafe(20)}     &    \textcolor{red}{39}     &     \textcolor{red}{5}    &   \unsafe(5) &     52    &     71    & \UNK  & --  &  -- & \UNK  & --  &  --     \\

 $\phi_3$ &  $\mathcal{N}_{1,7}$ &   \unsafe(20)       &   1     &    0.09      &    \textcolor{blue}{\unsafe(20)}    &     \textcolor{blue}{1}    &    \textcolor{blue}{0.09}     &   \unsafe(20)       &    1   &   0.21         &  \textcolor{red}{\unsafe(20)}  &  \textcolor{red}{1} & \textcolor{red}{0.04} & \unsafe(20) & 1 & 0.20 \\

  $\phi_4$ &  $\mathcal{N}_{1,9}$ &   \unsafe(20)       &   1     &    0.08      &    \textcolor{blue}{\unsafe(20) }   &     \textcolor{blue}{1}   &    \textcolor{blue}{0.08}    &  { \unsafe(20)}       &    1   &   0.24&  \textcolor{red}{\unsafe(20)}  &  \textcolor{red}{1} & \textcolor{red}{0.02} & \unsafe(20) & 1 & 0.48 \\

 $\phi_8$ &  $\mathcal{N}_{2,9}$ &   \unsafe(11)       &   306     &    38      &    \textcolor{red}{\unsafe(20)}    &     \textcolor{red}{45}    &    \textcolor{red}{6}     &   \textcolor{blue}{\unsafe(15) }      &    \textcolor{blue}{26}   &   \textcolor{blue}{36}         &  \unsafe(11)  &  578 & 45 & \unsafe(2) & 12 & 85 \\
 \bottomrule                   
\end{tabular}}
\begin{tablenotes}
\item  Note: \UNK and  \unsafe are respectively short for \textsc{Unknown} and \textsc{Unsafe}. 
\end{tablenotes}
\end{footnotesize}
\end{table*}

Figure~\ref{verify_confidence} displays the mean and standard deviation of the confidence levels on the satisfaction of property $\phi_1, \phi_6, \phi_7$ w.r.t. $\mathcal{N}_{1,2}, \mathcal{N}_{1,1}, \mathcal{N}_{1,9}$ respectively, among 20 independent evaluations. It can be observed that the $\phi_1$ holds on $\mathcal{N}_{1,2}$ with relatively high confidence, approaching 80\%, while the confidence levels of the other two cases are relatively low, around 40\%. As the verification epoch increases, the confidence level's mean and standard deviation tend to stabilize. The stabilization of the standard deviation stems from the reduction of randomness, and that of the mean can be explained by the exponential small sub-polytopes within the exact output region and simply increasing the epochs is still insufficient to cover the output region effectively. Therefore, due to the stochastic nature of the verification framework and the exponential number of sub-spaces, the confidence level returned by \urfornnv serves as a preliminary reference, and \urfornnv is highly recommended for property falsification.

\begin{figure}[!htb]
\centering
\includegraphics[width=.35\textwidth]{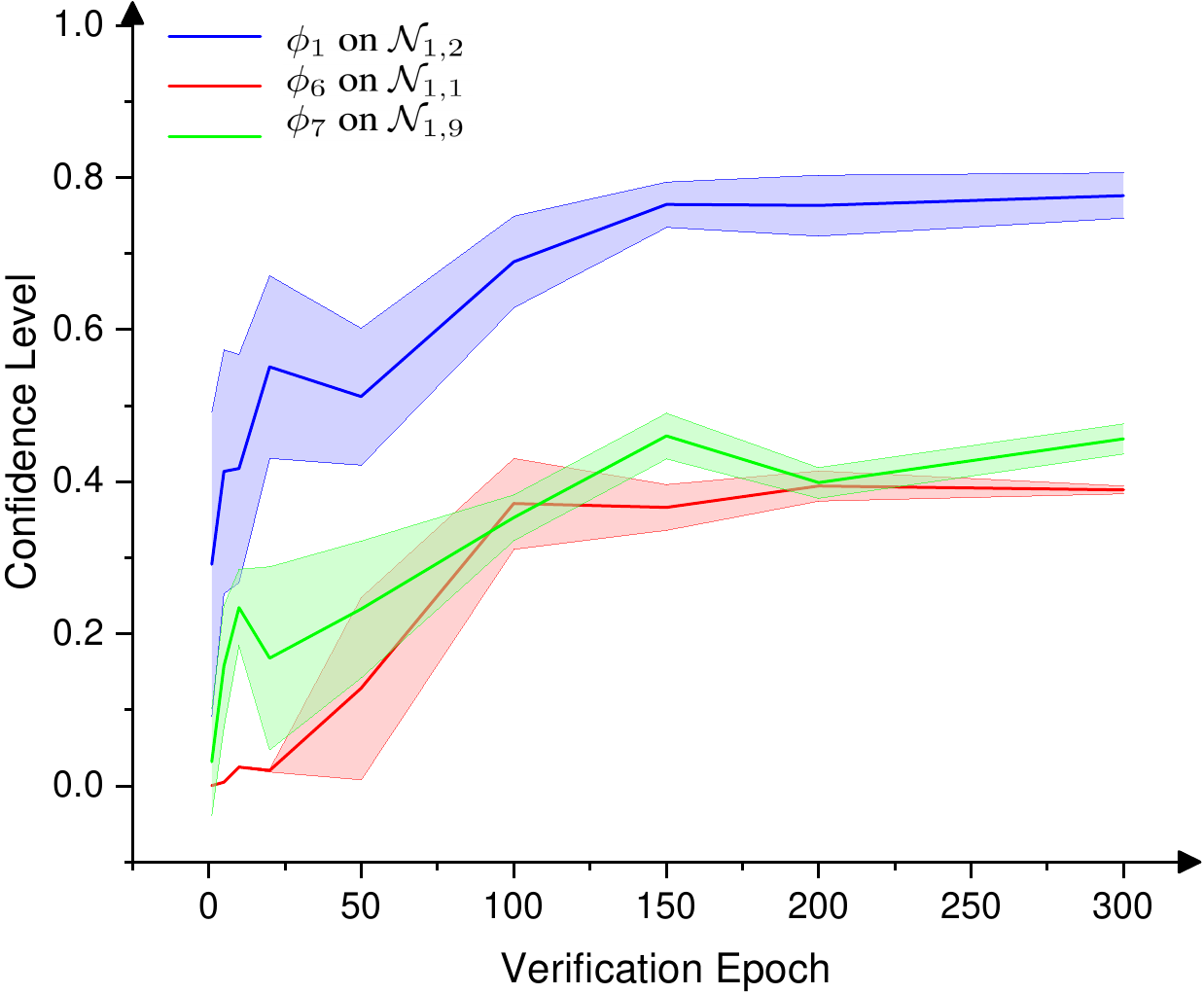}
\caption{Verifying properties with confidence levels}
\label{verify_confidence}	

\end{figure}

\subsection{Optimization Strategy Comparison}

This subsection compares the performance of the proposed optimization strategies for \urfornnv. Notably, all the optimization strategies are essentially designed for Alg. \ref{alg:underapp} and the positive effect of the maximal under-approximation and parallel execution is evident. Consequently, we mainly focus on two categories of optimization strategies: dimension-priority assignment and sub-polytope pruning and we conduct experiments by combining the MF/RF and CTP/TP optimization strategies on Alg. \ref{alg:underapp} (named Pure version), comparing the effectiveness and efficiency based on verification conclusions (VC), verification epochs (VE) and verification time (VT). 

 We select 4 properties across 10 {\dnn}s from Table \ref{exp:false} that were successfully falsified as test cases. Each strategy is independently tested 20 times for different cases and the experimental results are shown in Table \ref{Compare_result}. If all tests fail to falsify the property, the verification conclusion (VC) is marked as \UNK. Otherwise, the number of successful validations is recorded in parentheses after ``\unsafe". The average number of verification epochs and verification time over 20 tests are also recorded in Table \ref{Compare_result}. Among performance of different strategies,  the optimal result is marked in red and the second best in blue.  It can be seen that the optimization strategies significantly improved the pure algorithm’s inability to falsify {\dnn} properties, and the verification time is generally reduced. Overall, we recommend the combination optimization strategy of RF+TP, with the second-best being RF+CTP. However, it is undeniable that the other two strategies may have more evident optimization effects in some cases.
\section{Related Work}
\label{work}
There have been a myriad of work ~\cite{huang2019reachnn,ivanov2020verifying,xiang2018output,10.1007/s11390-020-0546-7,JCST-2207-12703,DBLP:journals/jcst/LiuSZW20} on the formal verification of DNNs, roughly categorized into \textit{sound and complete verification methods} and \textit{sound and in complete ones}. The sound and complete methods are mainly established on SAT/SMT solvers or exact reachability analysis. For \sat/\smt solver-based strategies, Katz
et al.~\cite{10.1007/978-3-319-63387-9_5} and Ehlers~\cite{ehlers2017formal} independently put forward Reluplex and Planet verifiers, two \smt solvers to verify ReLU {\dnn}s on properties expressed with SMT constraints, which both employed the \dpll algorithm~\cite{10.1145/368273.368557} to split different cases and exclude conflicting clauses. Building upon Reluplex, Katz et al.~\cite{10.1007/978-3-030-25540-4_26} later implemented the Marabou (or Marabou DnC) framework, which is no longer limited to $\reluop$ activations and can verify both fully-connected and convolutional {\dnn}s. For the exact reachability analysis based ones, Xiang et al.~\cite{DBLP:journals/corr/abs-1712-08163} and Tran et al.~\cite{10.1145/3358230,10.1007/978-3-030-30942-8_39} respectively computed the exact output regions with polytopes and star sets. Although these methods can provide complete and deterministic verification conclusions, the verification duration is computationally prohibitive and their scalability is greatly limited.  

As for the sound and incomplete methods, over-approximated abstract interpretation attracted much more attention, which propagates abstract domains layer by layer in an over-approximate way \cite{cousot1977abstract}, such as intervals \cite{wang2018efficient}, zonotopes and star sets \cite{10.1007/978-3-030-30942-8_39}. AI$^2$~\cite{gehr2018ai2} is a representative method based on zonotope abstract domains, which can verify {\dnn}s with piecewise linear activation functions. Subsequently, it was further developed to obtain tighter results via improving abstract transformation on {\dnn}s with other activation functions, like $\tanh$ and $\sigmoid$~\cite{10.5555/3327546.3327739}. Later, Singh et al.~\cite{10.1145/3290354} proposed an abstract domain that combines floating point polyhedra with intervals to over-approximate exact output regions more closely. Subsequently, Yang et al.~\cite{yang2021improving} presented a spurious region-guided strategy to refine the output regions generated by \cite{10.1145/3290354}. Moreover, Liang et al.~\cite{10.1007/978-3-031-35257-7_15} focused on the topological properties established within {\dnn}s to reduce the warping effect of over-approximation propagation based on set-boundary analysis. 


\section{Conclusion}
\label{concl}

To alleviate the unknown dilemma encountered in over-approximation {\dnn} verification, this paper proposes the \urfornnv verification framework based on under-approximation reachability analysis, which is the first utilization of under-approximation in {\dnn} verification, as far as we know. The essence of \urfornnv lies in the under-approximation of $\reluop$ activations in means of randomly searching binary tree branches. Unlike existing verification strategies, \urfornnv is more proficient in falsifying {\dnn} properties and comparison experiments demonstrate its effectiveness and efficiency. In the future, we will consider optimizing the computation efficiency of the under-approximation algorithm and extending the \urfornnv framework to {\dnn}s with other activation functions.

\section*{Ethical Statement}

There are no ethical issues.


\bibliographystyle{named}
\bibliography{reference}

\newpage

\renewcommand\thesubsection{\Alph{subsection}}
\setcounter{algorithm}{0}

\section*{Appendix}

\subsection{Proof of Theorem \ref{thm:under}}
\label{proof sound}

\textbf{Theorem 1} \textit{The polytope with vertex matrix  $\bm{V}^u$ returned by Alg. \ref{alg:underapp} is an under-approximation polytope of $\reluop(P)$.}
\begin{proof}

For a specific dimension $d$, it is notable that $\reluop_d(P)$ may contain at most two polytopes: the one ``above'' $\mathcal{H}_d$ (denoted by ${P}_d^{+}$) and the one ``within'' $\mathcal{H}_d$ (denoted by ${P}_i^{-}$), respectively resulting from the top part and bottom part of polytope $P$ w.r.t. hyperplane $\mathcal{H}_d$.

According to the definition of $\reluop$ operator, the vertex set of sub-polytope ${P}_d^{+}$ is composed of the vertices above hyperplane $\mathcal{H}_d$ and the intersection points between $P$ and $\mathcal{H}_d$, and the vertex set of sub-polytope ${P}_d^{-}$ consists of the projection of the vertices below hyperplane $\mathcal{H}_d$ and the intersection points between $P$ and $\mathcal{H}_d$ (maybe the points are redundant in some cases but they must contain the ground-truth vertex set). 

Next we show that the sub-polytope returned by Alg.~\ref{alg:underapp} is completely contained in either ${P}_d^{+}$ or ${P}_d^{-}$. It requires to show that the vertex set $S^u=S_{proj}\cup S_r$ is located in  either ${P}_d^{+}$ or ${P}_d^{-}$.

For the first case $S=S_n, T=S_p, S_{proj}=S_p$, it means that Alg.~\ref{alg:underapp} replaces the negative vertex set with $S_r$ and remains the positive vertex set $S_p$. 
\begin{itemize}
    \item Obvious, for $\bm{s}\in S_{proj} (i.e., S_p)$, $\bm{s}$ is a vertex above $\mathcal{H}_d$ and it is a vertex of resultant sub-polytope ${P}_d^{+}$;
    \item  For a point $\bm{s}\in S_r$, it is computed with a segment $Seg$ and the hyperplane $\mathcal{H}_d$. Since $Seg$ is represented by a positive vertex and a negative vertex of polytope $P$, the intersection between $Seg$ and $\mathcal{H}_d$  locates on $P$ (Convexity) and  $\mathcal{H}_d$ (Intersection) simultaneously. Thus, $\bm{s}$ lies in the sub-polytope ${P}_d^{+}$.
\end{itemize}

For the other case  $S=S_n, T=S_p$, it means that Alg.~\ref{alg:underapp} replaces the positive vertex set with $S_r$ and remains the projection of the negative vertex set $S_p$. 
\begin{itemize}
    \item For $\bm{s}\in S_{proj}$, $\bm{s}$ is the projection of a negative vertex $\mathcal{H}_d$ and it is a vertex of resultant sub-polytope ${P}_d^{-}$;
    \item  For a point $\bm{s}\in S_r$, it is computed with a segment $Seg$ and the hyperplane $\mathcal{H}_d$. Since $Seg$ is represented by a positive vertex and a negative vertex of polytope $P$, the intersection between $Seg$ and $\mathcal{H}_d$  locates on $P$ (Convexity) and  $\mathcal{H}_d$ (Intersection) simultaneously. Thus, $\bm{s}$ lies in the sub-polytope ${P}_d^{-}$.
\end{itemize}

Then, $S_{proj}\cup S_r$ is located in  either ${P}_d^{+}$ or ${P}_d^{-}$, and the containment that  $\alpha\bm{s}+(1-\alpha)\bm{t} \subset {P}_d^{+}$ or $\alpha \bm{s}+(1-\alpha)\bm{t} \subset {P}_d^{-}$, $0\le \alpha \le 1, \bm{s},\bm{t}\in S_{proj}\cup S_r$, is immediate from the convexity. That is to say, the polytope with vertex matrix  $\bm{V}^u$ (initialized with $S^u$) is an under-approximation polytope of $\reluop_d(P)$ and it is an immediate conclusion for the $\reluop$ operator over all dimensions.
\end{proof}

\subsection{Proof of Theorem \ref{thm: complete}}
\label{proof complete}

\textbf{Theorem 2} \textit{The convex union of resultant polytopes returned by an infinite number of executions of Alg.~\ref{alg:underapp} can cover the exact output region of $\reluop$ operators, i.e., $\reluop(P) \subset {\rm cvx}(\cup_{i=1}^{\infty}P^{U}_i)$.}
\begin{proof}

Proving Theorem \ref{thm: complete} is equivalent to show that every point in $\reluop(P)$ can be covered. Further, it is equivalent to show that every vertex of the sub-polytopes (a set of polytopes) resulted from $P$ under $\reluop$ operators can be covered. 

Without loss of generality, we take $\reluop_d(P)$ as an example. It is required to show that every vertex of ${P}_d^{+}$ and ${P}_d^{-}$ can be covered in some epoch and the convex union can cover their convex combination.

For the case of ${P}_d^{+}$, the vertex set of sub-polytope ${P}_d^{+}$ is composed of the vertices above hyperplane $\mathcal{H}_d$ and the intersection points between $P$ and $\mathcal{H}_d$. The former set is the set $S_{proj}=S_p$.  Every point of latter set must be the intersection point between the segment connecting a positive vertex and a negative vertex  
 and the hyperplane $\mathcal{H}_d$ or the convex combination of two intersection points. The intersection point can be chosen from a set $S\in S_{can}$ and their convex combination is implemented by the final $\rm cvx(\cdot)$ operator, thus the latter set and their combination can be covered.

For the case of ${P}_d^{-}$, the vertex set of sub-polytope ${P}_d^{-}$ consists of the projection of the vertices below hyperplane $\mathcal{H}_d$ and the intersection points between $P$ and $\mathcal{H}_d$. The former set is the set $S_{proj}$. As for the latter set, it is the same as that of the above case.

Since all the possible vertices can be covered in some epoch, their convexhull can eventually cover the sub-polytope ${P}_d^{+}$ and ${P}_d^{-}$. It is easy to get a conclusion for the $\reluop$ operator over all dimensions.
\end{proof}

\subsection{ACAS Xu DNNs and Safety Properties}
\label{DNN property}
\begin{figure*}[htbp]
    \centering
     \includegraphics[scale=0.53]{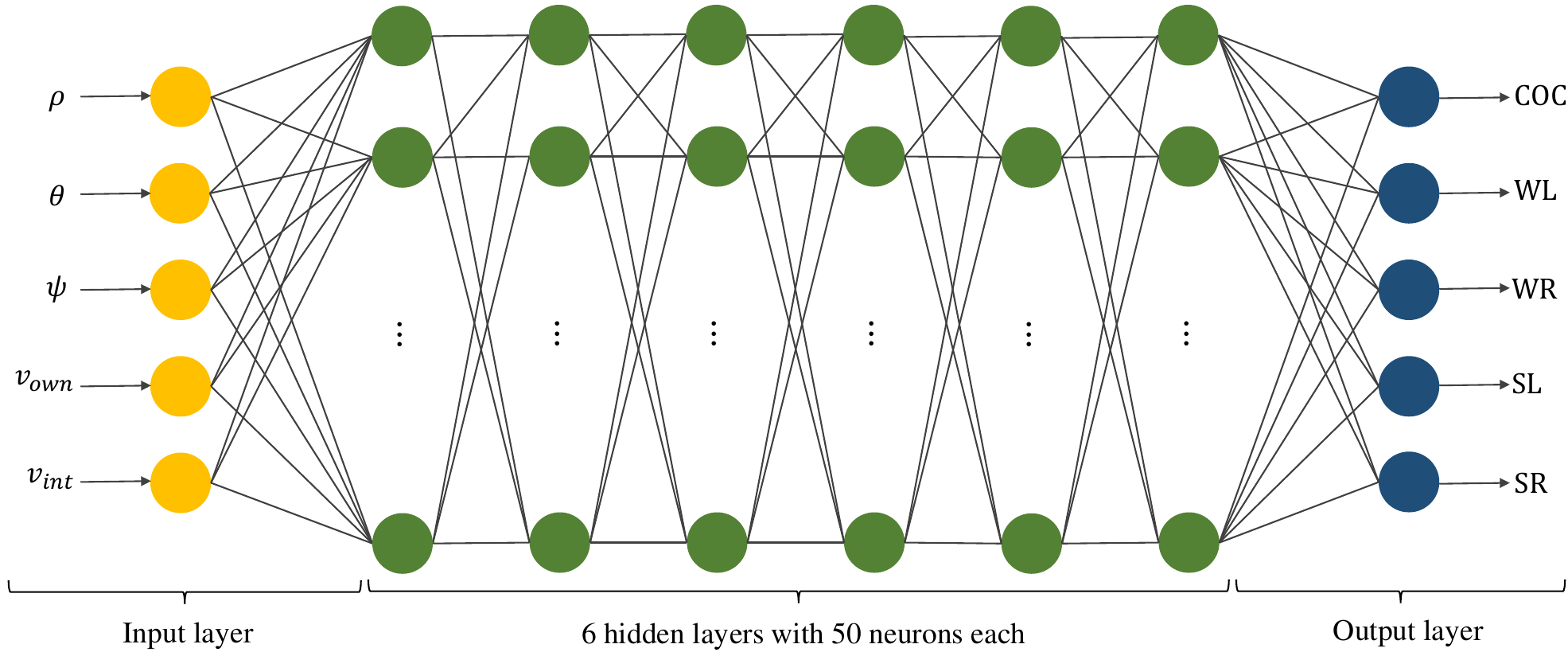}
    \caption{Depiction of the ACAS Xu DNN}
    \label{ACAS Xu DNN}
\end{figure*}
The structure of ACAS Xu DNNs is demonstrated in Fig.~\ref{ACAS Xu DNN}, input units for the ACAS Xu DNNs' inputs are $\rho$ (in feet), $\theta,\psi$ (in radians), $v_\text{own}$ and $v_\text{int}$ (in feet per second). $\theta$ and $\psi$ are measured counterclockwise, and are always in the range $[-\pi,\pi]$. The output units are the scores for the action Clear-of-Conflict (COC), weak left (WL), weak right (WR), strong left (SL) and strong right (SR). In line with the introduction in Section~\ref{exp}, the family of 45 ACAS Xu
DNNs are indexed according to the previous action $a_\text{prev}$ and
time until loss of vertical separation $\tau$ (in seconds), where $a_\text{prev}\in$ $[$Clear-of-Conflict, weak left, weak right, strong left, strong right$]$ and $\tau\in$ $[0,1,5,10,20,40,60,80,100]$.

We use $\mathcal{N}_{x,y}$ to denote the network trained for the
$x$-th value of $a_\text{prev}$ and $y$-th value of $\tau$. For
example, $\mathcal{N}_{2,3}$ is the network trained for the case where
$a_\text{prev}=$  weak left
and $\tau=5$. Using this notation, we describe and formally define the ten safety properties $\phi_1,\phi_2,\cdots,\phi_{10}$ that we tested, which are respectively shown in Table~\ref{tab:DNNproperty} and Table~\ref{tab:formalPro}.

\begin{table*}[htbp]
    \centering
        \caption{Description of the safety properties defined on  ACAS Xu DNNs.}
    \label{tab:DNNproperty}
    \begin{tabularx}{\textwidth}{lX}
    \toprule
       \textbf{Property}  & \textbf{Description} \\
       \hline
       $\phi_1$  & If the intruder is distant and is
  significantly slower than the ownship, the score of a COC advisory will
  always be below a certain fixed threshold. \\
   $\phi_2$  &  If the intruder is distant and is
  significantly slower than the ownship, the score of a COC advisory will
  never be maximal. \\
   $\phi_3$  &  If the intruder is directly ahead and is moving towards the ownship,
  the score for COC will not be minimal.\\
   $\phi_4$  &  If the intruder is directly ahead and is moving away from the
  ownship but at a lower speed than that of the ownship,
  the score for COC will not be minimal.\\
   $\phi_5$  &  If the intruder is near and approaching from the
  left, the network advises ``strong right''.\\
   $\phi_6$  &  If the intruder is sufficiently far away,
  the network advises COC.\\
   $\phi_7$  &If vertical separation is large,
  the network will never advise a strong turn. \\
   $\phi_8$  &  For a large vertical separation and a previous
 ``weak left'' advisory, the network will either output COC or
 continue advising ``weak left''.\\
   $\phi_9$  & Even if the previous advisory was ``weak right'',
the presence of a nearby intruder will cause the network to output a
 ``strong left'' advisory instead.\\
   $\phi_{10}$  & For a far away intruder, the network advises COC.\\
   \bottomrule
    \end{tabularx}

\end{table*}

\begin{table*}[htbp]
    \centering
    \caption{Formalization of the safety properties defined on ACAS Xu DNNs.}
    \label{tab:formalPro}
    \setlength{\tabcolsep}{1mm}{
    \begin{tabular}{cccccccc}
    \toprule
\multirow{2}{*}{\makecell[c]{\textbf{Property}}}   & \multirow{2}{*}{\makecell[c]{$\rho$\\ (ft)}} & \multirow{2}{*}{\makecell[c]{$\theta$\\ (rad)}} & \multirow{2}{*}{\makecell[c]{$\psi$\\ (rad)}} & \multirow{2}{*}{\makecell[c]{$v_{own}$\\ (ft/s)}} & \multirow{2}{*}{\makecell[c]{$v_{int}$\\ (ft/s)}} & \multirow{2}{*}{\makecell[c]{\textbf{Test}\\ \textbf{DNNs}}}   &  \multirow{2}{*}{\makecell[c]{\textbf{Output}}}  \\
  & &  &  &  & & & \\  
  \hline
       $\phi_1$  & $\rho\geq 55948$ & -- & -- &  $\geq 1145$ & $\leq 60$ & All & COC $\le 1500$ \\ 
        $\phi_2$  &  $\geq 55948$ & -- & -- &  $\geq 1145$ & $\leq 60$ & $\mathcal{N}_{x\ge 2,y}$& COC not max \\ 
         $\phi_3$  &  $\in[1500, 1800]$ & $\in[-0.06 ,0.06]$ &  $\geq 3.10$ & $\geq 980$ & $\geq 960$ & $ \mathcal{N}_{1,y\ge 7}$ & COC not min\\ 
          $\phi_4$  & $\in[1500, 1800]$ & $\in[-0.06 ,0.06]$ & 0 & $\geq 1000$ & $\in[700, 800]$&$ \mathcal{N}_{1,y\ge 7}$ & COC not min\\ 
        $\phi_5$  & $\in[250,500]$& $\in[0.2,0.4]$ & $\approx-\pi$ & $\in[100, 400]$ & $\in[0, 400]$& $\mathcal{N}_{1,1}$& SR min\\ 
         $\phi_6$  &$\in[12000,6000]$ &$\in[0.7,\pi]\vee[-\pi,-0.7]$  & $\approx-\pi$  &  $\in[100, 1200]$& $\in[0, 1200]$&$\mathcal{N}_{1,1}$ & COC min\\ 
           $\phi_7$  & $\in[0,60760]$& $\in[-\pi ,\pi]$ &$\in[-\pi ,\pi]$  & $\in[100, 1200]$ & $\in[0, 1200]$ &$\mathcal{N}_{1,9}$ & SL,SR min\\ 
        $\phi_8$  & $\in[0,60760]$ & $\in[-\pi ,-0.75\pi]$  &$\in[-0.1 ,0.1]$  & $\in[600, 1200]$ & $\in[600, 1200]$&$\mathcal{N}_{2,9}$ & WL$\vee$ COC\\ 
         $\phi_9$  & $\in[2000,7000]$ & $\in[-0.4,-0.14]$ & $\approx-\pi$  & $\in[100, 150]$ & $\in[0, 140]$& $\mathcal{N}_{3,3}$& SR min \\ 
          $\phi_{10}$  & $\in[36000,60760]$ & $\in[0.7 ,\pi]$ & $\approx-\pi$  & $\in[900, 1200]$ &$\in[600, 1200]$ & $\mathcal{N}_{4,5}$& COC min\\ 
          \bottomrule
    \end{tabular}}

\end{table*}

\end{document}